\documentclass[a4paper,12pt,reqno]{amsart}

\usepackage{graphicx,amssymb,datetime,float,MnSymbol,currfile,tikz}
\usepackage[round]{natbib}
\usetikzlibrary{arrows}
\usetikzlibrary{decorations.markings}

\newtheorem{theorem}{Theorem}
\newtheorem{lemma}{Lemma}

\def\ci{\!\perp\!}
\def\nci{\!\not\perp\!}
\def\ra{\rightarrow}
\def\la{\leftarrow}

\def\ot{\mathrel{\reflectbox{\ensuremath{\multimap}}}}
\def\to{\multimap}

\newcommand{\comments}[1]{}

\tikzset{tt/.style={decoration={
  markings,
  mark=at position .485 with {\arrow{>}},
  mark=at position .515 with {\arrow{<}}},postaction={decorate}}}

\begin{document}

\title[]{Unifying DAGs and UGs}

\author[]{Jose M. Pe\~{n}a\\
IDA, Link\"oping University, Sweden\\
jose.m.pena@liu.se}

\date{\currenttime, \ddmmyydate{\today}, \currfilename}

\begin{abstract}
We introduce a new class of graphical models that generalizes Lauritzen-Wermuth-Frydenberg chain graphs by relaxing the semi-directed acyclity constraint so that only directed cycles are forbidden. Moreover, up to two edges are allowed between any pair of nodes. Specifically, we present local, pairwise and global Markov properties for the new graphical models and prove their equivalence. We also present an equivalent factorization property. Finally, we present a causal interpretation of the new models.
\end{abstract}

\maketitle

\section{INTRODUCTION}

Lauritzen-Wermuth-Frydenberg chain graphs (LWF CGs) are usually described as unifying directed acyclic graphs (DAGs) and undirected graphs (UGs) \citep[p. 53]{Lauritzen1996}. However, this is arguable because the only constraint that DAGs and UGs jointly impose is the absence of directed cycles, whereas LWF CGs forbid semi-directed cycles which is a stronger constraint. Moreover, LWF CGs do not allow more than one edge between any pair of nodes. In this work, we consider graphs with directed and undirected edges but without directed cycles. The graphs can have up to two different edges between any pair of nodes. Therefore, our graphs truly unify DAGs and UGs. Hence, we call them UDAGs.

As we will see, UDAGs generalize LWF CGs. Two other such generalizations that can be found in the literature are reciprocal graphs (RGs) \citep{Koster1996} and acyclic graphs (AGs) \citep{LauritzenandSadeghi2017}. The main differences between UDAGs and these two classes of graphical models are the following. UDAGs are not a subclass of RGs because, unlike RGs, they do not constrain the semi-directed cycles allowed. UDAGs are a subclass of AGs. However, \citeauthor{LauritzenandSadeghi2017} define a global Markov property for AGs but no local or pairwise Markov property. We define the three properties for UDAGs. \citeauthor{LauritzenandSadeghi2017} do define though a pairwise Markov property for a subclass of AGs called chain mixed graphs (CMGs), but no local Markov property. Moreover, UDAGs are not a subclass of CMGs because, unlike CMGs, they can have semi-directed cycles. In addition to the local, pairwise and global Markov properties, we also define a factorization property for UDAGs. Such a property exists for RGs but not yet for AGs or CMGs. We also note that the algorithm developed by \cite{Sonntagetal.2015} for learning LWF CGs from data can easily be adapted to learn UDAGs (see Appendix A). To our knowledge, there is no algorithm for learning RGs, AGs or CMGs. Finally, it is worth mentioning that our work complements that by \citet{Richardson2003}, where DAGs and covariance (bidirected) graphs are unified.

The rest of the paper is organized as follows. Section \ref{sec:preliminaries} introduces some notation and definitions. Sections \ref{sec:global} and \ref{sec:local} present the global, local and pairwise Markov properties for UDAGs and prove their equivalence. Section \ref{sec:factorization} does the same for the factorization property. Section \ref{sec:causal} presents a causal interpretation of UDAGs (which inspires a learning algorithm that can be found in Appendix B). Section \ref{sec:discussion} closes the paper with some discussion.

\section{PRELIMINARIES}\label{sec:preliminaries}

In this section, we introduce some concepts about graphical models. Unless otherwise stated, all the graphs and probability distributions in this paper are defined over a finite set of random variables $V$. The elements of $V$ are not distinguished from singletons. An UDAG $G$ is a graph with possibly directed and undirected edges but without directed cycles, i.e. $A \ra \cdots \ra A$ is forbidden. There may be up to two different edges between any pair of nodes. Edges between a node and itself are not allowed. We denote by $A \to B$ that the edge $A \ra B$ or $A - B$ or both are in $G$.

Given an UDAG $G$, the parents of a set $X \subseteq V$ are $pa(X) = \{B | B \ra A$ is in $G$ with $A \in X \}$. The children of $X$ are $ch(X) = \{B | A \ra B$ is in $G$ with $A \in X \}$. The neighbors of $X$ are $ne(X) = \{B | A - B$ is in $G$ with $A \in X \}$. The ancestors of $X$ are $an(X) = \{B | B \to \cdots \to A$ is in $G$ with $A \in X \}$. Moreover, $X$ is called ancestral set if $X = an(X)$. The descendants of $X$ are $de(X) = \{B | A \to \cdots \to B$ is in $G$ with $A \in X \}$. The sets just defined are defined with respect to $G$. When they are defined with respect to another UDAG, this is indicated with a subscript.

Given an UDAG $G$, the moral graph of $G$ is the UG $G^m$ such that $A - B$ is in $G^m$ if and only if $A \to B$, $A \ra C \la B$, or $A \ra C - \cdots - D \la B$ is in $G$. Given a set $W \subseteq V$, we let $G_W$ denote the subgraph of $G$ induced by $W$. Given an UG $H$, we let $H^W$ denote the marginal subgraph of $H$ over $W$, i.e. the edge $A - B$ is in $H^W$ if and only if $A - B$ is in $H$ or $A - V_1 - \cdots - V_n - B$ is $H$ with $V_1, \ldots, V_n \notin W$. A set of nodes of $H$ is complete if there exists an undirected edge between every pair of nodes in the set. A clique of $H$ is a maximal complete set of nodes. The cliques of $H$ are denoted as $cl(H)$.

A route between two nodes $V_1$ and $V_n$ of an UDAG $G$ is a sequence of (not necessarily distinct) edges $E_1, \ldots, E_{n-1}$ in $G$ such that $E_i$ links the nodes $V_i$ and $V_{i+1}$. A route is called a path if the nodes in the route are all different. An undirected route is a route whose edges are all undirected. A section of a route $\rho$ is a maximal undirected subroute of $\rho$. A section $V_{2} - \cdots - V_{n-1}$ of $\rho$ is called collider section if $V_{1} \ra V_{2} - \ldots - V_{n-1} \la V_{n}$ is a subroute of $\rho$. Given a set $Z \subseteq V$, $\rho$ is said to be $Z$-active if (i) every collider section of $\rho$ has a node in $Z$, and (ii) every non-collider section of $\rho$ has no node in $Z$.

Let $X$, $Y$, $W$ and $Z$ be disjoint subsets of $V$. We represent by $X \ci_p Y | Z$ that $X$ and $Y$ are conditionally independent given $Z$ in a probability distribution $p$. Every probability distribution $p$ satisfies the following four properties: Symmetry $X \ci_p Y | Z \Rightarrow Y \ci_p X | Z$, decomposition $X \ci_p Y \cup W | Z \Rightarrow X \ci_p Y | Z$, weak union $X \ci_p Y \cup W | Z \Rightarrow X \ci_p Y | Z \cup W$, and contraction $X \ci_p Y | Z \cup W \land X \ci_p W | Z \Rightarrow X \ci_p Y \cup W | Z$. If $p$ is strictly positive, then it also satisfies the intersection property $X \ci_p Y | Z \cup W \land X \ci_p W | Z \cup Y \Rightarrow X \ci_p Y \cup W | Z$.

\section{GLOBAL MARKOV PROPERTY}\label{sec:global}

Given three disjoint sets $X, Y, Z \subseteq V$, we say that $X$ is separated from $Y$ given $Z$ in an UDAG $G$, denoted as $X \ci Y | Z$, if every path in $(G_{an(X \cup Y \cup Z)})^m$ between a node in $X$ and a node in $Y$ has a node in $Z$. As the theorem below proves, this is equivalent to saying that there is no route in $G$ between a node of $X$ and a node of $Y$ that is $Z$-active. Note that these separation criteria generalize those developed by \citet{Lauritzen1996} and \citet{Studeny1998} for LWF CGs.

\begin{theorem}
The two separation criteria for UDAGs in the paragraph above are equivalent.
\end{theorem}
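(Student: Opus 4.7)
My plan is to prove the two inclusions by contrapositive, converting between a $Z$-active route in $G$ and a path in $(G_{an(X\cup Y\cup Z)})^m$ avoiding $Z$. The argument follows the template of Studen\'y's proof for LWF CGs, modified to handle the relaxed acyclicity of UDAGs (where semi-directed cycles are permitted).

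For ``$Z$-active route in $G \Rightarrow$ moral path avoiding $Z$'', let $\rho$ be such a route from $a \in X$ to $b \in Y$. I first show that every node of $\rho$ lies in $an(X \cup Y \cup Z)$. Each collider section contains a node of $Z$ by $Z$-activity, and its undirected edges make the whole section consist of $\to$-ancestors of that node; each non-collider end-section contains an endpoint of $\rho$ in $X \cup Y$; and each non-collider internal section has at least one outward-pointing boundary edge $V_k \ra V_{k+1}$, so an induction outward along $\rho$ (terminating at an end-section or a collider section) places $V_{k+1}$ and hence the whole section in $an(X \cup Y \cup Z)$. I then trace a route in $(G_{an(X \cup Y \cup Z)})^m$ by traversing each non-collider section along its undirected edges (which persist in the moral graph) and skipping each collider section $V_{j-1} \ra C_1 - \cdots - C_r \la V_{k+1}$ via the moralization edge $V_{j-1} - V_{k+1}$. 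Two consecutive sections cannot both be collider, since the shared directed edge would have to point both ways, so the skipping is well-defined. By $Z$-activity the visited non-collider sections contain no node of $Z$ and the collider sections are never entered, so the route and any path extracted from it avoid $Z$.

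For the converse, let $\pi = V_0, \ldots, V_k$ be a path in $(G_{an(X \cup Y \cup Z)})^m$ from $a$ to $b$ with $V_1, \ldots, V_{k-1} \notin Z$. For each edge $V_{i-1} - V_i$ of $\pi$ I install a subroute in $G$: the edge $V_{i-1} \to V_i$ itself when available, or else a witnessing collider section $V_{i-1} \ra C_1^{(i)} - \cdots - C_{r_i}^{(i)} \la V_i$ (which lies in $G_{an(X\cup Y\cup Z)}$ by definition of the moral graph). The concatenation is a route from $a$ to $b$ whose non-collider sections contain only nodes of $\pi$ and hence avoid $Z$. The main obstacle is to arrange that every collider section of this route intersects $Z$. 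Since each $C_j^{(i)} \in an(X \cup Y \cup Z)$, I fix a $\to$-path from some section node ending at the first node $M$ it reaches in $X \cup Y \cup Z$. If $M \in Z$, I detour along the path and back, which inserts $M$ as a new single-node collider section in $Z$ and, by the minimal choice of $M$, keeps every intermediate node as a non-collider single-node section lying outside $Z$. If instead $M \in X \cup Y$, the same path is used to terminate the route at $M$, replacing the appropriate downstream portion of the route and combining with the already-constructed $\pi$-image on the opposite side of the offending moral edge to yield a valid $X$-to-$Y$ route in which no problematic collider section is introduced. Pruning the resulting walk yields the required $Z$-active route, and the principal subtlety is verifying that these $\to$-path surgeries preserve $Z$-activity of all sections simultaneously at the junctions where the new subroutes meet the rest of the construction.
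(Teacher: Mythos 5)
Your proposal is correct and follows essentially the same argument as the paper: the same ancestrality argument and moralization-edge shortcuts for the first direction, and for the converse the same conversion of moral edges into directed edges or collider sections followed by the same surgeries (re-routing to another node of $X\cup Y$, and detouring from a collider section down to $Z$ and back). The only difference is cosmetic: you merge the paper's two separate repair steps into one case analysis on the first node of $X\cup Y\cup Z$ reached by a descending route, which also quietly fixes the claim that the descent avoids $Z$ (your phrase ``single-node section'' is slightly off since $\to$ may be undirected, but the argument is unaffected).
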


\begin{proof}
Assume that there is a $Z$-active route $\rho$ in $G$ between $A \in X$ and $B \in Y$. Clearly, every node in a collider section is in $an(Z)$. Moreover, every node in a non-collider section is ancestor of $A$, $B$ or a node in a collider section, which implies that it is in $an(A \cup B \cup Z)$. Therefore, there is a route between $A$ and $B$ in $(G_{an(X \cup Y \cup Z)})^m$. Moreover, the route can be modified into a route $\varrho$ that circumvents $Z$ by noting that there is an edge $V_1 - V_n$ in $(G_{an(X \cup Y \cup Z)})^m$ whenever $V_{1} \ra V_{2} - \cdots - V_{n-1} \la V_{n}$ is a subroute of $\rho$. The route $\varrho$ can be converted into a path by removing loops.

Conversely, assume that there is a path $\rho$ in $(G_{an(X \cup Y \cup Z)})^m$ between $A \in X$ and $B \in Y$ that circumvents $Z$. Note that $\rho$ can be converted into a route $\varrho$ in $G$ as follows: If the edge $V_1 - V_n$ in $\rho$ was added to $(G_{an(X \cup Y \cup Z)})^m$ because $V_1 \to V_n$, $V_1 \la V_n$ or $V_{1} \ra V_{2} - \cdots - V_{n-1} \la V_{n}$ was in $G_{an(X \cup Y \cup Z)}$, then replace $V_1 - V_n$ with $V_1 \to V_n$, $V_1 \la V_n$ or $V_{1} \ra V_{2} - \cdots - V_{n-1} \la V_{n}$, respectively. Note that the non-collider sections of $\varrho$ have no node in $Z$ for $\rho$ to circumvent $Z$, whereas the collider sections of $\varrho$ have all their nodes in $an(X \cup Y \cup Z)$ by definition of $(G_{an(X \cup Y \cup Z)})^m$.

Note that we can assume without loss of generality that all the collider sections of $\varrho$ have some node in $an(Z)$ because, otherwise, if there is a collider section with no node in $an(Z)$ but with some node $C$ in $an(X)$ then there is a route $A' \ot \cdots \ot C$ with $A' \in X$ which can replace the subroute of $\varrho$ between $A$ and $C$. Likewise for $an(Y)$ and some $B' \in Y$.

Finally, note that every collider section $V_{1} \ra V_{2} - \cdots - V_{n-1} \la V_{n}$ of $\varrho$ that has no node in $Z$ must have a node $V_i$ in $an(Z) \setminus Z$ with $2 \leq i \leq n-1$, which implies that there is a route $V_i \to \cdots \to C$ where $C$ is the only node of the route that is in $Z$. Therefore, we can replace the collider section with $V_1 \ra V_2 - \cdots - V_i \to \cdots \to C \ot \cdots \ot V_i - \cdots - V_{n-1} \la V_n$. Repeating this step results in a $Z$-active route between a node in $X$ and a node in $Y$.
\end{proof}

We say that a probability distribution $p$ satisfies the global Markov property with respect to an UDAG $G$ if $X \ci_p Y | Z$ for all disjoint sets $X, Y, Z \subseteq V$ such that $X \ci Y | Z$. Note that two non-adjacent nodes in $G$ are not necessarily separated. For example, $C \ci D | Z$ does not hold for any $Z \subseteq \{A, B, E, F, H\}$ in the UDAG in Figure \ref{fig:example}. This drawback is shared by AGs. Although this problem cannot be solved for general AGs \citep[Figure 6]{LauritzenandSadeghi2017}, it can be solved for the subclass of CMGs by adding edges without altering the separations represented \citep[Corollary 3.1]{LauritzenandSadeghi2017}. Unfortunately, a similar solution does not exist for UDAGs. For example, adding the edge $C \to D$ to the UDAG in Figure \ref{fig:example} makes $A \ci B | D$ cease holding, whereas adding the edge $C \la D$ makes $A \ci B | C \cup F$ cease holding. Adding two edges between $C$ and $D$ does not help either, since one of them must be $C - D$. The following lemma characterizes the problematic pairs of nodes.

\begin{figure}
\centering
\begin{tabular}{c}
\begin{tikzpicture}[inner sep=1mm]
\node at (-1,0) (A) {$A$};
\node at (3,0) (B) {$B$};
\node at (0,0) (C) {$C$};
\node at (2,0) (D) {$D$};
\node at (0,-1) (E) {$E$};
\node at (1,-1) (F) {$F$};
\node at (2,-1) (H) {$H$};

\path[->] (A) edge (C);
\path[->] (B) edge (D);
\path[->] (C) edge (E);
\path[->] (D) edge (H);
\path[-] (E) edge (F);
\path[-] (H) edge (F);
\path[->] (F) edge (C);
\end{tikzpicture}
\end{tabular}\caption{Example where non-adjacency does not imply separation.}\label{fig:example}
\end{figure}
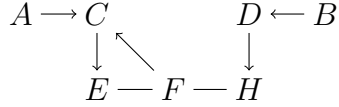

\begin{lemma}
Given two non-adjacent nodes $V_1$ and $V_n$ in an UDAG $G$, $V_1 \ci V_n | Z$ does not hold for any $Z \subseteq V \setminus ( V_1 \cup V_n )$ if and only if $V_{1} \ra V_{2} - \cdots - V_{n-1} \la V_{n}$ is in $G$, and $V_i \in an(V_1 \cup V_n)$ for some $1 < i < n$.\footnote{In the terminology of \citet{LauritzenandSadeghi2017}, this route is a primitive inducing walk.}
\end{lemma}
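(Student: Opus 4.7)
My plan is to prove the two directions of the equivalence, working throughout with the moral graph separation criterion, whose equivalence to the route criterion is guaranteed by Theorem 1.

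For the direction $(\Leftarrow)$, suppose the route $V_1 \ra V_2 - \cdots - V_{n-1} \la V_n$ is in $G$ and some $V_i$ with $1 < i < n$ lies in $an(V_1 \cup V_n)$. The crux is to observe that then \emph{every} $V_j$ with $1 < j < n$ lies in $an(V_1 \cup V_n)$: concatenating the undirected subroute from $V_j$ to $V_i$ with a directed path from $V_i$ to $V_1$ or $V_n$ produces a semi-directed path, which qualifies under the paper's definition of $an$ because the relation $\to$ absorbs $-$. Consequently, for any $Z \subseteq V \setminus \{V_1, V_n\}$, the whole collider section is retained inside $G_{an(V_1 \cup V_n \cup Z)}$, and moralisation therefore introduces the edge $V_1 - V_n$ in $(G_{an(V_1 \cup V_n \cup Z)})^m$. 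That edge is a path of length one with no intermediate node, so it cannot meet $Z$, and Theorem 1 gives $V_1 \not\ci V_n | Z$.

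For the direction $(\Rightarrow)$, I argue the contrapositive: if no collider section of the stated form admits an interior ancestor of $V_1 \cup V_n$, I exhibit a separating set. I take $Z = an(V_1 \cup V_n) \setminus \{V_1, V_n\}$, so that $an(V_1 \cup V_n \cup Z) = an(V_1 \cup V_n)$. In the moral graph $H := (G_{an(V_1 \cup V_n)})^m$, every interior node of any path between $V_1$ and $V_n$ automatically lies in $Z$, so it is enough to rule out the direct edge $V_1 - V_n$ in $H$. Since $V_1$ and $V_n$ are non-adjacent in $G$, the only way this edge could appear is via moralisation of some collider section $V_1 \ra V_2' - \cdots - V_{n'-1}' \la V_n$ surviving in $G_{an(V_1 \cup V_n)}$; but survival forces every $V_j'$ into $an(V_1 \cup V_n)$, and in particular some interior node is in $an(V_1 \cup V_n)$, violating the contrapositive hypothesis.

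The main subtlety I expect is the step in $(\Leftarrow)$ propagating ancestor-status from the single witness $V_i$ to every interior $V_j$ via the undirected spine, which depends crucially on the convention that $\to$ counts undirected edges. Once this is registered, both directions reduce to tracking whether collider sections survive ancestral restriction and are then turned into a direct $V_1 - V_n$ edge by moralisation.
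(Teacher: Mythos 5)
Your proof is correct, and it differs from the paper's own argument in instructive ways. For the if direction ($\Leftarrow$), the paper works with the route criterion: assuming without loss of generality $V_i \in an(V_1)$, it builds a route $V_1 \ot \cdots \ot V_i - \cdots - V_{n-1} \la V_n$ and then does a case analysis on where $Z$ meets it, splicing in a detour $V_i \to \cdots \to C \ot \cdots \ot V_i$ when $Z$ hits the directed stem. You instead stay entirely in the moral graph: your observation that the undirected spine propagates ancestor status from the single witness $V_i$ to every interior node (legitimate under the paper's convention that $\to$ covers $-$) shows the whole collider section survives in $G_{an(V_1 \cup V_n \cup Z)}$ for every $Z$, so moralization always creates the edge $V_1 - V_n$, which no $Z$ can block; this dispenses with the paper's case analysis, at the price of invoking the moral-graph criterion rather than exhibiting an explicit $Z$-active route as the paper does. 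For the only-if direction your argument matches the paper's intent -- non-adjacency in $G$ together with adjacency in $(G_{an(V_1 \cup V_n)})^m$ forces exactly a configuration $V_1 \ra V_2 - \cdots - V_{n-1} \la V_n$ with all interior nodes in $an(V_1 \cup V_n)$ -- but your witness $Z = an(V_1 \cup V_n) \setminus (V_1 \cup V_n)$ is what actually delivers the adjacency, since every node of the moral graph other than $V_1, V_n$ then lies in $Z$; the paper's ``simply consider $Z=\emptyset$'' only yields a path in the moral graph, not a direct edge, so your version supplies a step the printed proof leaves implicit.
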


\begin{proof}
To prove the if part, assume without loss of generality that $V_i \in an(V_1)$. This together with the route in the lemma imply that $G$ has a route $\rho$ of the form $V_1 \ot \cdots \ot V_i - \cdots - V_{n-1} \la V_{n}$. If no node in $Z$ is in $\rho$, then $V_1 \ci V_n | Z$ does not hold due to $\rho$. If $C \in Z$ is in the subroute $V_i - \cdots - V_{n-1} \la V_{n}$ of $\rho$, then $V_1 \ci V_n | Z$ does not hold due to the route in the lemma. Finally, if $C \in Z$ is in the subroute $V_1 \ot \cdots \ot V_i$ of $\rho$, then $V_1 \ci V_n | Z$ does not hold due to the route $V_1 \ra V_2 - \cdots - V_i \to \cdots \to C \ot \cdots \ot V_i - \cdots - V_{n-1} \la V_n$.

To prove the only if part, simply consider $Z=\emptyset$ and note that $V_1$ and $V_n$ are adjacent in $(G_{an(V_1 \cup V_n)})^m$ only if $G$ has a subgraph of the form described in the lemma.
\end{proof}

\begin{figure}
\centering
\begin{tabular}{c}
\begin{tikzpicture}[inner sep=1mm]
\node at (0,0) (A) {$A$};
\node at (1,0) (B) {$B$};
\node at (2,0) (C) {$C$};
\node at (3,0) (D) {$D$};
\node at (1,-1) (E) {$E$};
\node at (2,-1) (F) {$F$};
\node at (1,-2) (I) {$I$};
\node at (2,-2) (J) {$J$};
\node at (0,-3) (K) {$K$};
\node at (1,-3) (L) {$L$};
\node at (2,-3) (M) {$M$};
\node at (3,-3) (N) {$N$};

\path[->] (A) edge (E);
\path[->] (B) edge (E);
\path[->] (C) edge (F);
\path[->] (D) edge (F);
\path[-] (E) edge (F);
\path[-] (I) edge (J);
\path[->] (E) edge (I);
\path[->] (J) edge (F);
\path[->] (K) edge (I);
\path[->] (L) edge (I);
\path[->] (M) edge (J);
\path[->] (N) edge (J);
\end{tikzpicture}
\end{tabular}\caption{Example of UDAG without Markov equivalent LWF CG.}\label{fig:examplesupplement}
\end{figure}
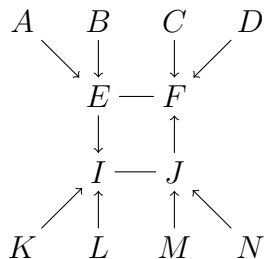

Finally, we show that the independence models representable with UDAGs are a proper superset of those representable with LWF CGs. In particular, we show that there is no LWF CG that is Markov equivalent to the UDAG in Figure \ref{fig:examplesupplement}, i.e. there is no LWF CG that represents exactly the independence model represented by the UDAG. Assume to the contrary that there is a LWF CG $H$ that is Markov equivalent to the UDAG $G$ in the figure. First, note that $A \ci B | \emptyset$ and $A \nci B | E$ imply that $H$ must have an induced subgraph $A \ra E \la B$. Likewise, $H$ must have induced subgraphs $C \ra F \la D$, $K \ra I \la L$, and $M \ra J \la N$. Next, note that $A \ci I | \{E, J\}$ implies that $H$ cannot have an edge $E \la I$. Likewise, $H$ cannot have an edge $F \ra J$. Note also that $A \ci F | \{B,C,D,E,J\}$ and $D \ci E | \{A,B,C,F,J\}$ imply that $H$ cannot have an edge $E \la F$ or $E \ra F$. Likewise, $H$ cannot have an edge $I \la J$ or $I \ra J$. Consequently, $H$ must have a subgraph of the form
\begin{center}
\begin{tikzpicture}[inner sep=1mm]
\node at (0,0) (A) {$A$};
\node at (1,0) (B) {$B$};
\node at (2,0) (C) {$C$};
\node at (3,0) (D) {$D$};
\node at (1,-1) (E) {$E$};
\node at (2,-1) (F) {$F$};
\node at (1,-2) (I) {$I$};
\node at (2,-2) (J) {$J$};
\node at (0,-3) (K) {$K$};
\node at (1,-3) (L) {$L$};
\node at (2,-3) (M) {$M$};
\node at (3,-3) (N) {$N$};

\path[->] (A) edge (E);
\path[->] (B) edge (E);
\path[->] (C) edge (F);
\path[->] (D) edge (F);
\path[-] (E) edge (F);
\path[-] (I) edge (J);
\path[-] (E) edge (I);
\path[-] (J) edge (F);
\path[->] (K) edge (I);
\path[->] (L) edge (I);
\path[->] (M) edge (J);
\path[->] (N) edge (J);
\end{tikzpicture}
\end{center}
This implies that $A \ci N | \{B,C,D,E,F,I,J,K,L,M\}$ holds in $G$ but not in $H$, which is a contradiction.

The following lemma shows that the existence of a semi-directed cycle is not sufficient to declare an UDAG non-equivalent to any LWF CG. Instead, the semi-directed cycle must occur in a particular configuration, e.g. as in Figure \ref{fig:examplesupplement}. For instance, the lemma implies that the UDAG $A \ra B \la C - B$ is Markov equivalent to the LWF CG $A - B - C - A$.

\begin{lemma}\label{lem:m}
Let $G$ denote an UDAG. If (i) $W$ is an ancestral set of nodes in $G$ of size greater than one, and (ii) $W$ is minimal with respect to the property (i), then replacing $G_W$ with $(G_W)^m$ in $G$ results in an UDAG $H$ that is Markov equivalent to $G$.
\end{lemma}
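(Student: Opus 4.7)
My plan is to prove the lemma in two stages: first verifying that $H$ is an UDAG, then showing Markov equivalence with $G$. For the first stage, since $W$ is ancestral in $G$, every edge of $G$ with one endpoint $w \in W$ and the other $x \notin W$ must have the form $w \ra x$---any other orientation would place $x$ in $an_G(W) = W$. Thus the only arrows in $H$ are those of $G$ lying outside $W$ together with boundary arrows leaving $W$, while every within-$W$ edge is undirected (coming from the UG $(G_W)^m$). Any directed cycle of $H$ therefore has to stay inside $V \setminus W$ (no undirected edge, no arrow into $W$), so it would be a directed cycle of $G$, contradicting the UDAG hypothesis on $G$.

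For the second stage I use the route-based separation criterion, equivalent to the moralization-based one by Theorem~1, and show for each disjoint $X, Y, Z \subseteq V$ that a $Z$-active route between $X$ and $Y$ exists in $G$ if and only if one exists in $H$. In the direction $G \Rightarrow H$, I take a $Z$-active route $\rho$ in $G$ and convert each within-$W$ edge $u \to v$ to the undirected edge $u - v$ of $(G_W)^m$ (case~(i) of moralization). A direct edge-wise conversion can merge a within-$W$ collider section of $\rho$ with adjacent undirected pieces, placing a required $Z$-node into a non-collider section and breaking activity. I repair this by substituting, for each offending within-$W$ collider subroute $V_1 \ra V_2 - \cdots - V_{n-1} \la V_n$ of $\rho$, the single marriage edge $V_1 - V_n$ of $(G_W)^m$ supplied by case~(iii) of moralization; this bypasses the interior while preserving $Z$-activity.

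For $H \Rightarrow G$ I expand each within-$W$ edge of a $Z$-active route $\rho'$ in $H$ back to its $G_W$-origin: case~(i) edges to themselves, marriage edges $V_1 - V_n$ to the subroute $V_1 \ra V_2 - \cdots - V_{n-1} \la V_n$ of $G_W$. The hard part will be that expanding a marriage edge creates a new collider section $V_2 - \cdots - V_{n-1}$ that need not contain any $Z$-node. This obstacle only arises when $|W| \geq 3$ (no marriage edges exist when $|W|=2$); in that case the minimality of $W$ forces a decomposition $W = W_1 \cup W_2$ with $|W_1| \leq 1$ and $an_G(\{v\}) = W$ for every $v \in W_2$. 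Since each inner $V_i$ of a moralization pattern has undirected neighbors and hence lies in $W_2$, and since $|W_1| \leq 1$ prevents both $V_1$ and $V_n$ from lying in $W_1$, at least one of $V_1, V_n$ is in $W_2$, so $V_i \in an_G(V_1 \cup V_n)$. The pattern is thus a primitive inducing walk (Lemma~1), so $V_1 \ci_G V_n | Z$ fails for every $Z$, and I can replace the marriage edge with a $Z$-active $G$-route between $V_1$ and $V_n$, preserving the activity of the overall route.
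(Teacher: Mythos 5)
Your first stage ($H$ is an UDAG) is fine, and your $G\Rightarrow H$ surgery (convert within-$W$ arrows to undirected edges, bypass within-$W$ collider sections by their marriage edges) can be made to work, since the ancestrality of $W$ forces every within-$W$ stretch of a route to be flanked by arrows pointing out of $W$. The real problem is the converse direction $H\Rightarrow G$, where your argument has two genuine gaps. First, "case (i) edges to themselves" is not innocuous: a case-(i) edge of $(G_W)^m$ may correspond to a \emph{directed} edge of $G$, and re-directing edges inside an undirected section can itself create new collider sections that lack $Z$-nodes. For example, if an active route in $H$ contains $x \la u - v - w - t \ra y$ (a non-collider section, so $u,v,w,t \notin Z$) and $G_W$ contains $u \ra v - w \la t$, the expansion yields the collider section $v - w$ with no node in $Z$; this failure has nothing to do with marriage edges, so your repair, which is only formulated for marriage edges, does not cover it. Second, and more seriously, the final step --- "I can replace the marriage edge with a $Z$-active $G$-route between $V_1$ and $V_n$, preserving the activity of the overall route" --- is exactly the claim that needs proof and it is not automatic: concatenating $Z$-active routes need not give a $Z$-active route, because the sections at the junctions $V_1$ and $V_n$ can change collider status. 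The detour supplied by Lemma 1 typically reaches $V_1$ through an arrowhead ($V_1 \ot \cdots$), which can turn the adjacent undirected stretch of your route into a collider section with no node in $Z$, or force a $Z$-node into a non-collider section. That this splicing is delicate is visible in the paper's own appendix, where the analogous statement for replacing a \emph{single} edge $A - B$ by $A \ra B$ requires a lengthy case analysis with extra detours through semi-directed paths, and is even left unfinished; your one-sentence appeal to Lemma 1 hides all of that work, so the proof is incomplete as it stands.

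It is also worth saying that the paper proves the lemma by a quite different and much shorter route: it never manipulates routes, but checks the moralization-based criterion directly, splitting into cases according to whether $an(X \cup Y \cup Z)$ in $G$ contains zero, one, or more than one node of $W$. Minimality of $W$ is used to show that in the last case all of $W$ lies in $an(X \cup Y \cup Z)$ and $(G_{an(X \cup Y \cup Z)})^m = (H_{an(X \cup Y \cup Z)})^m$, while in the one-node case the two moral graphs differ only by a copy of $(G_W)^m$ glued at a single cut vertex, which cannot affect separations. If you want a complete proof along your lines you must supply the junction case analysis sketched above; otherwise, the ancestor-set argument is the cleaner path. (Minor points: your structural claim about minimal $W$ should be stated as $an(v) \supseteq W \setminus v$ for every non-source $v$, with the degenerate case of two non-adjacent source nodes, where $H = G$, treated separately; and an inner node of a width-one marriage pattern $V_1 \ra V_2 \la V_3$ has no undirected neighbors, though it still has ancestors, so your conclusion survives.)
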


\begin{proof}

First, note that $H$ is an UDAG because no directed cycle can be created by replacing $G_W$ with $(G_W)^m$ in $G$. Now, consider checking whether a separation $X \ci Y | Z$ holds in $G$ and $H$. Consider the following cases.

\begin{itemize}

\item[(1.)] Assume that $an(X \cup Y \cup Z)$ in $G$ includes no node in $W$. Then, $(G_{an(X \cup Y \cup Z)})^m = (H_{an(X \cup Y \cup Z)})^m$ and thus $X \ci Y | Z$ holds in both $G$ and $H$ or in none.

\item[(2.)] Assume that $an(X \cup Y \cup Z)$ in $G$ includes exactly one of the nodes in $W$, here denoted by $A$. Then, $an(X \cup Y \cup Z)$ in $H$ includes all the nodes in $W$ because $W$ is connected in $G$ since, otherwise, it is not minimal which is a contradiction. Moreover, note that $A$ is the only node shared by $(G_{an(X \cup Y \cup Z)})^m$ and $(G_W)^m$ because, otherwise, there must be a second node in $W$ that is in $an(X \cup Y \cup Z)$ in $G$, which is a contradiction. Then, $(H_{an(X \cup Y \cup Z)})^m = (G_{an(X \cup Y \cup Z)})^m \cup (G_W)^m$ and thus $X \ci Y | Z$ holds in both $G$ and $H$ or in none.

\item[(3.)] Assume that $an(X \cup Y \cup Z)$ in $G$ includes more than one of the nodes in $W$. Then, $(G_{an(X \cup Y \cup Z)})^m$ includes all the nodes in $W$ because, otherwise, $W$ is not minimal which is a contradiction. Then, $(G_{an(X \cup Y \cup Z)})^m = (H_{an(X \cup Y \cup Z)})^m$ and thus $X \ci Y | Z$ holds in both $G$ and $H$ or in none. To see it, note that by construction $(G_{an(X \cup Y \cup Z)})^m$ and $(H_{an(X \cup Y \cup Z)})^m$ differ only if the former has an edge $V_1 - V_n$ that the latter does not have. This occurs only if $G_{an(X \cup Y \cup Z)}$ has a subgraph of the form $V_1 \ra V_2 - \cdots - V_{n-1} \la V_n$, whereas $H_{an(X \cup Y \cup Z)}$ has a subgraph of the form $V_1 - V_2 - \cdots - V_{n-1} \la V_n$ or $V_1 - V_2 - \cdots - V_{n-1} - V_n$. The former case implies that $V_1, \ldots, V_{n-1}$ are in $W$ whereas $V_n$ is not. This contradicts the fact that $W$ is an ancestral set. The latter case implies that $V_1, \ldots, V_n$ are in $W$, which implies that $V_1 - V_n$ is in $H$, which is a contradiction.

\end{itemize}

\end{proof}

Note that the condition in the lemma is sufficient but not necessary: The UDAGs $A \ra B \ra C$ and $A \ra B - C$ are Markov equivalent.

\subsection{SEPARATION ALGORITHM}\label{sec:algorithm}

Since there may be infinite many routes in an UDAG $G$, one may wonder if the separation criterion based on ruling out $Z$-active routes that we have presented above is of any use in practice. The algorithm below shows how to implement it to check in a finite number of steps whether $X \ci Y | Z$ holds. The algorithm is a generalization of the one developed by \citet{Studeny1998} for LWF CGs, which was later slightly improved by \citet{Sonntagetal.2015}. The algorithm basically consists in repeatedly executing some rules to build the sets $U_1, U_2, U_3 \subseteq V$, which can be described as follows.
\begin{itemize}
\item $B \in U_1$ if and only if there exists a $Z$-active route between $A \in X$ and $B$ in $G$ which ends with the subroute $V_i \ra V_{i+1} - \cdots - V_{i+k}=B$ with $k\geq 1$.

\item $B \in U_2$ if and only if there exists a $Z$-active route between $A \in X$ and $B$ in $G$ which does not end with the subroute $V_i \ra V_{i+1} - \cdots - V_{i+k}=B$ with $k\geq 1$.

\item $B \in U_3$ if and only if there exists a node $C \in U_1 \cup U_2$ and a route $C=V_1 \rightarrow V_2 - \cdots - V_k = B$ in $G$ with $k \geq 2$ such that $\{V_2,\ldots,V_k\} \cap Z \neq \emptyset$.
\end{itemize}

The algorithm starts with $U_1=U_3=\emptyset$ and $U_2=X$. The algorithm executes the following rules until $U_1$, $U_2$ and $U_3$ cannot be further enlarged.
\begin{itemize}
\item $C \in U_2$, $C \ot D$ is in $G$, and $D \notin Z \Rightarrow D \in U_2$.
\item $C \in U_1 \cup U_2$, $C \ra D$ is in $G$, and $D \notin Z \Rightarrow D \in U_1$.
\item $C \in U_1$, $C - D$ is in $G$, and $D \notin Z \Rightarrow D \in U_1$.
\item $C \in U_1 \cup U_2$, $C \ra D$ is in $G$, and $D \in Z \Rightarrow D \in U_3$.
\item $C \in U_1$, $C - D$ is in $G$, and $D \in Z \Rightarrow D \in U_3$.
\item $C \in U_3$, and $C - D$ is in $G \Rightarrow D \in U_3$.
\item $C \in U_3$, $C \la D$ is in $G$, and $D \notin Z \Rightarrow D \in U_2$.
\end{itemize}

One can prove that, when the algorithm halts, there is a $Z$-active route between each node in $U_1 \cup U_2$ and some node in $X$. The proof is identical to the one for LWF CGs by \citet[Lemma 5.2]{Studeny1998} and \citet[Proposition 1]{Sonntagetal.2015}. Therefore, $X \ci Y | Z$ if and only if $Y \subseteq V \setminus ( U_1 \cup U_2 )$. In Appendix A, we use this result to develop an algorithm for learning UDAGs from data.

\section{LOCAL AND PAIRWISE MARKOV PROPERTIES}\label{sec:local}

We say that a probability distribution $p$ satisfies the local Markov property with respect to an UDAG $G$ if for any ancestral set $W$,
\[
A \ci_p W \setminus (A \cup ne_{(G_W)^m}(A)) | ne_{(G_W)^m}(A)
\]
for any $A \in W$. Similarly, we say that a probability distribution $p$ satisfies the pairwise Markov property with respect to $G$ if for any ancestral set $W$,
\[
A \ci_p B | W \setminus (A \cup B)
\]
for any $A, B \in W$ such that $B \notin ne_{(G_W)^m}(A)$.

\begin{theorem}\label{the:eqLP}
Given a probability distribution $p$ satisfying the intersection property, $p$ satisfies the local Markov property with respect to an UDAG $G$ if and only if it satisfies the pairwise Markov property with respect to $G$.
\end{theorem}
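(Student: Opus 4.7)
The plan is a two-direction argument. Local $\Rightarrow$ pairwise needs only weak union; pairwise $\Rightarrow$ local is a short induction whose single inductive step is one application of the intersection property (which is exactly the hypothesis on $p$).

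For local $\Rightarrow$ pairwise, fix an ancestral set $W$ and nodes $A,B\in W$ with $B\notin ne_{(G_W)^m}(A)$, and write $N=ne_{(G_W)^m}(A)$. The local Markov property supplies $A\ci_p W\setminus(A\cup N)\mid N$. Since $B\in W\setminus(A\cup N)$, splitting the left-hand set as $B$ together with $W\setminus(A\cup B\cup N)$ and applying weak union gives
\[
A\ci_p B\mid N\cup\bigl(W\setminus(A\cup B\cup N)\bigr)=W\setminus(A\cup B),
\]
which is the pairwise Markov property.

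For pairwise $\Rightarrow$ local, fix an ancestral set $W$ and $A\in W$, let $N=ne_{(G_W)^m}(A)$, and enumerate $W\setminus(A\cup N)=\{B_1,\dots,B_k\}$. I will prove by induction on $j\in\{0,1,\dots,k\}$ the claim
\[
A\ci_p\{B_1,\dots,B_j\}\mid N\cup\{B_{j+1},\dots,B_k\},
\]
so that taking $j=k$ is exactly the local Markov statement. The base case $j=0$ is vacuous. For the step from $j$ to $j+1$, apply the intersection property with its $Y$, $W$, $Z$ instantiated by $\{B_1,\dots,B_j\}$, $\{B_{j+1}\}$, and $N\cup\{B_{j+2},\dots,B_k\}$ respectively: the first conjunct is precisely the inductive hypothesis, while the second, namely $A\ci_p B_{j+1}\mid N\cup\{B_1,\dots,B_j\}\cup\{B_{j+2},\dots,B_k\}$, is the pairwise Markov statement for the pair $A,B_{j+1}$, since $B_{j+1}\notin ne_{(G_W)^m}(A)$ and the conditioning set equals $W\setminus(A\cup B_{j+1})$. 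Intersection then returns $A\ci_p\{B_1,\dots,B_{j+1}\}\mid N\cup\{B_{j+2},\dots,B_k\}$, closing the induction.

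The main thing to keep straight is the bookkeeping of the conditioning sets at each inductive step, which works out cleanly because $A$, $N$ and the distinct nodes $B_1,\dots,B_k$ partition $W$ and so the relevant four arguments of intersection are pairwise disjoint throughout. Beyond that I do not anticipate any genuine obstacle.
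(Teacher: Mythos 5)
Your proposal is correct and takes essentially the same route as the paper, whose proof simply notes that the local-to-pairwise direction is weak union and the pairwise-to-local direction is repeated application of the intersection property; you have merely made the enumeration of $W \setminus (A \cup ne_{(G_W)^m}(A))$ and the disjointness bookkeeping explicit. No gaps.
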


\begin{proof}
The if part follows by repeated application of the intersection property. The only if part follows by the weak union property.
\end{proof}

\begin{theorem}\label{the:eqPG}
Given a probability distribution $p$ satisfying the intersection property, $p$ satisfies the pairwise Markov property with respect to an UDAG $G$ if and only if it satisfies the global Markov property with respect to $G$.
\end{theorem}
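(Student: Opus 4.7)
The only-if direction (global implies pairwise) is immediate. Given an ancestral set $W$ and nodes $A, B \in W$ with $B \notin ne_{(G_W)^m}(A)$, set $Z = W \setminus (A \cup B)$. Because $W$ is ancestral, $an(A \cup B \cup Z) = an(W) = W$, so the moral graph used in the global criterion is exactly $(G_W)^m$. Any path between $A$ and $B$ in $(G_W)^m$ has length at least two (since they are non-adjacent) and hence contains an intermediate node, which must lie in $Z$. Therefore $A \ci B | Z$ holds in $G$, and the global Markov property yields $A \ci_p B | Z$, which is precisely the required pairwise statement.

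For the if direction (pairwise implies global) I would adapt the classical chain-graph argument. Suppose $X \ci Y | Z$ in $G$, let $W = an(X \cup Y \cup Z)$, and let $X^*$ be the set of nodes of $W$ reachable from $X$ in $(G_W)^m$ by paths that avoid $Z$; put $Y^* = W \setminus (X^* \cup Z)$. The separation hypothesis forces $Y \subseteq Y^*$, and by construction $(G_W)^m$ contains no edge joining a node of $X^*$ to a node of $Y^*$, since such an edge would extend a $Z$-avoiding path from $X$ and place its $Y^*$-endpoint into $X^*$. It therefore suffices to prove $X^* \ci_p Y^* | Z$, after which decomposition delivers $X \ci_p Y | Z$.

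To establish $X^* \ci_p Y^* | Z$ I would run two nested inductions built on the intersection property. Fix $A \in X^*$ and enumerate $Y^* = \{B_1, \ldots, B_m\}$; each pairwise statement $A \ci_p B_i | W \setminus (A \cup B_i)$ is available because $A$ and $B_i$ are non-adjacent in $(G_W)^m$ and $W$ is ancestral. Intersection applied to two such statements with common conditioning set $W \setminus (A \cup B_i \cup B_j)$ yields $A \ci_p \{B_i, B_j\} | W \setminus (A \cup B_i \cup B_j)$, and iterating over all indices produces $A \ci_p Y^* | (X^* \setminus A) \cup Z$. A symmetric induction, now over the elements of $X^*$ with $Y^*$ playing the role previously played by $A$, upgrades this to $X^* \ci_p Y^* | Z$. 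The main obstacle I anticipate is bookkeeping rather than substance: one must carefully track conditioning sets so that each appeal to intersection fits the form $X \ci_p Y | Z \cup W \land X \ci_p W | Z \cup Y$, which is precisely where the intersection hypothesis on $p$ is used.
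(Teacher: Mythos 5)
Your proposal is correct and follows essentially the same route as the paper: both directions hinge on passing to the ancestral set $W=an(X\cup Y\cup Z)$ and working in the undirected graph $(G_W)^m$, where the pairwise statements of the UDAG become exactly the pairwise statements of that UG. The only difference is that the paper simply cites the pairwise--global equivalence for UGs (Lauritzen, Theorem 3.7), whereas you re-derive it inline via the standard reachability decomposition $X^*, Y^*$ and the nested intersection-property inductions, which is precisely the classical proof of that cited result.
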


\begin{proof}
The if part is trivial. To prove the only if part, let $W=an(X \cup Y \cup Z)$ and note that the pairwise and global Markov properties are equivalent for UGs \citep[Theorem 3.7]{Lauritzen1996}.
\end{proof}

Note that the local Markov property for LWF CGs specifies a single independence for each node \citep[p. 55]{Lauritzen1996}. However, the local Markov property for UDAGs specifies many more independences, specifically an independence for any node and ancestral set containing the node. All in all, our local Markov property serves its purpose, namely to identify a subset of the independences specified by the global Markov property that implies the rest. In the next section, we show how to reduce this subset.

\subsection{REDUCTION}\label{sec:reductionlocal}

The number of independences specified by the local Markov property for UDAGs can be reduced by considering only maximal ancestral sets for any node $A$, i.e. those ancestral sets $W'$ such that $A \in W'$ and $ne_{(G_{W'})^m}(A) \subset ne_{(G_{W''})^m}(A)$ for any ancestral set $W''$ such that $W' \subset W''$. Note that there may be several maximal ancestral sets $W'$ for $A$, each for a different set $ne_{(G_{W'})^m}(A)$ as will be shown. The independences for the non-maximal ancestral sets follow from the independences for the maximal ancestral sets by the decomposition property. In other words, for any non-maximal ancestral set $W$ and $A \in W$,
\[
A \ci_p W \setminus (A \cup ne_{(G_W)^m}(A)) | ne_{(G_W)^m}(A)
\]
follows from
\[
A \ci_p W' \setminus (A \cup ne_{(G_{W'})^m}(A)) | ne_{(G_{W'})^m}(A)
\]
where $W'$ is the maximal ancestral set for $A$ such that $ne_{(G_W)^m}(A) = ne_{(G_{W'})^m}(A)$. In the UDAG in Figure \ref{fig:example2}, for instance, $W_1=\{A, B, C, D \}$, $W_2=\{ A, B, C, D, E, I, J, K \}$, and $W_3=\{A, B, C, D, E, F, H, I, J, K \}$ are three ancestral sets that contain the node $B$. However, only $W_2$ and $W_3$ are maximal for $B$: $W_1$ is not maximal because $W_1 \subset W_2$ but $ne_{(G_{W_1})^m}(B) = ne_{(G_{W_2})^m}(B)$, and $W_2$ is maximal because $W_2 \subset W_3$ and $ne_{(G_{W_2})^m}(B) \subset ne_{(G_{W_3})^m}(B)$. Note that $W_1$ specifies $B \ci_p D | \{A, C\}$, and $W_2$ specifies $B \ci_p \{D, E, I, J, K\} | \{A, C\}$. Clearly, the latter independence implies the former by the decomposition property. Therefore, there is no need to specify both independences, as the local Markov property does. It suffices to specify just the second.

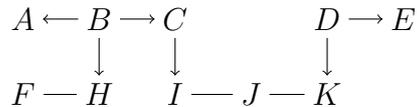
\begin{figure}
\centering
\begin{tabular}{c}
\begin{tikzpicture}[inner sep=1mm]
\node at (0,0) (A) {$A$};
\node at (1,0) (B) {$B$};
\node at (2,0) (C) {$C$};
\node at (4,0) (D) {$D$};
\node at (5,0) (E) {$E$};
\node at (0,-1) (F) {$F$};
\node at (1,-1) (H) {$H$};
\node at (2,-1) (I) {$I$};
\node at (3,-1) (J) {$J$};
\node at (4,-1) (K) {$K$};

\path[->] (B) edge (A);
\path[->] (B) edge (C);
\path[->] (B) edge (H);
\path[->] (D) edge (E);
\path[->] (D) edge (K);
\path[->] (C) edge (I);
\path[-] (I) edge (J);
\path[-] (J) edge (K);
\path[-] (F) edge (H);
\end{tikzpicture}
\end{tabular}\caption{Example where the local Markov property can be improved by considering only maximal ancestral sets.}\label{fig:example2}
\end{figure}

A more convenient characterization of maximal ancestral sets is the following. An ancestral set $W'$ is maximal for $A \in W'$ if and only if $W' = V \setminus [ ( ch(A) \cup de(ch(A)) ) \setminus W' ]$. To see it, note that $B \in ne_{(G_{W'})^m}(A)$ if and only if $A \ot B$, $A \ra B$, or $A \ra C - \cdots - D \la B$ is in $G_{W'}$. Note that all the parents and neighbors of $A$ are in $W'$, because $W'$ is ancestral. However, if there is some child $B$ of $A$ that is not in $W'$, then any ancestral set $W''$ that contains $W'$ and $B$ or any node that is a descendant of $B$ will be such that $ne_{(G_{W'})^m}(A) \subset ne_{(G_{W''})^m}(A)$.

The number of independences specified by the pairwise Markov property can also be reduced by considering only maximal ancestral sets. This can be proven in the same way as Theorem \ref{the:eqLP}.

\section{FACTORIZATION PROPERTY}\label{sec:factorization}

\begin{theorem}\label{the:factorization}
Given a probability distribution $p$ satisfying the intersection property, $p$ satisfies the pairwise Markov property with respect to an UDAG $G$ if and only if for any ancestral set $W$,
\[
p(W) = \prod_{K \in cl((G_W)^m)} \varphi(K)
\]
where $\varphi(K)$ is a non-negative function.
\end{theorem}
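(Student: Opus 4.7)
The plan is to observe that for each fixed ancestral set $W$, both sides of the equivalence are statements purely about the marginal distribution $p(W)$ and the undirected graph $(G_W)^m$, and then to reduce to the classical Hammersley-Clifford theorem for UGs.

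First I would unpack the two properties at a fixed ancestral set $W$. The pairwise Markov property for $G$, restricted to pairs $A,B\in W$, asserts that $A \ci_p B \mid W\setminus(A\cup B)$ whenever $B\notin ne_{(G_W)^m}(A)$; applied to $p(W)$ this is exactly the pairwise Markov property of $p(W)$ with respect to the UG $(G_W)^m$. Likewise, $p(W)=\prod_{K\in cl((G_W)^m)}\varphi(K)$ is the clique factorization of $p(W)$ with respect to $(G_W)^m$. So for each $W$ the question becomes: does the pairwise Markov property of $p(W)$ w.r.t.\ $(G_W)^m$ hold iff $p(W)$ factorizes over the cliques of $(G_W)^m$? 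This is a standard UG question.

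For the if direction I would fix an arbitrary ancestral set $W$, assume $p(W)$ factorizes as stated, and use the classical implication (factorization $\Rightarrow$ global $\Rightarrow$ pairwise Markov property for UGs, see Proposition~3.8 and Theorem~3.9 in \citet{Lauritzen1996}) to conclude that $A \ci_p B \mid W\setminus(A\cup B)$ for every pair $A,B\in W$ non-adjacent in $(G_W)^m$. Ranging over all ancestral sets yields the pairwise Markov property for $G$. Note that this direction does not use the intersection property.

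For the only if direction I would again fix an ancestral set $W$. By hypothesis $p(W)$ satisfies the pairwise Markov property w.r.t.\ the UG $(G_W)^m$, and by the intersection property (which $p(W)$ inherits from $p$ on the variables of $W$) the equivalence of the pairwise Markov property and the clique factorization for UGs (Theorem~3.7 together with Theorem~3.9 in \citet{Lauritzen1996}) gives $p(W)=\prod_{K\in cl((G_W)^m)}\varphi(K)$. Ranging over all $W$ finishes the proof. The main obstacle is really just bookkeeping: one has to be careful that the pairwise Markov property of $G$ at a given ancestral set $W$ matches the standard UG pairwise Markov property of $p(W)$ on $(G_W)^m$ (so that no spurious independences outside $W$ are needed), and one has to appeal, as in the two preceding theorems, to the UG equivalence under intersection without invoking strict positivity.
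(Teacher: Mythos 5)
Your proposal is correct and follows essentially the same route as the paper, whose entire proof is to invoke the equivalence of the pairwise Markov property and the clique factorization for UGs \citep[Theorem 3.9]{Lauritzen1996}, applied to $p(W)$ and $(G_W)^m$ for each ancestral set $W$. You merely spell out the bookkeeping (that the UDAG pairwise property at $W$ is exactly the UG pairwise property of $p(W)$ w.r.t.\ $(G_W)^m$, and that $p(W)$ inherits the intersection property) that the paper leaves implicit.
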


\begin{proof}
It suffices to recall the equivalence between the pairwise Markov property and the factorization property for UGs \citep[Theorem 3.9]{Lauritzen1996}.
\end{proof}

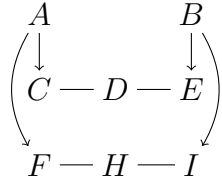
\begin{figure}
\centering
\begin{tabular}{c}
\begin{tikzpicture}[inner sep=1mm]
\node at (0,0) (A) {$A$};
\node at (2,0) (B) {$B$};
\node at (0,-1) (C) {$C$};
\node at (1,-1) (D) {$D$};
\node at (2,-1) (E) {$E$};
\node at (0,-2) (F) {$F$};
\node at (1,-2) (H) {$H$};
\node at (2,-2) (I) {$I$};

\path[->] (A) edge (C);
\path[->] (B) edge (E);
\path[->] (A) edge [bend right] (F);
\path[->] (B) edge [bend left] (I);
\path[-] (C) edge (D);
\path[-] (E) edge (D);
\path[-] (F) edge (H);
\path[-] (I) edge (H);
\end{tikzpicture}
\end{tabular}\caption{Example where the factorization property can be improved by considering only maximal ancestral sets.}\label{fig:example3}
\end{figure}

\subsection{REDUCTION}

The number of factorizations specified by the factorization property for UDAGs can be reduced by considering only maximal ancestral sets, i.e. those ancestral sets $W'$ such that $(G_{W'})^m$ is a proper subgraph of $((G_{W''})^m)^{W'}$ for any ancestral set $W''$ such that $W' \subset W''$. These maximal ancestral sets do not necessarily coincide with the ones defined in Section \ref{sec:reductionlocal}. The factorizations for the non-maximal ancestral sets follow from the factorizations for the maximal ancestral sets. To see it, note that for any non-maximal ancestral set $W$, the probability distribution $p(W)$ can be computed by marginalization from $p(W')$ where $W'$ is any maximal ancestral set such that $((G_{W'})^m)^{W} = (G_{W})^m$. Note also that $p(W)$ factorizes according to $((G_{W'})^m)^{W}$ and thus according to $(G_{W})^m$, by \citet[Lemma 3.1]{Studeny1997} and \citet[Theorems 3.7 and 3.9]{Lauritzen1996}. In the UDAG in Figure \ref{fig:example3}, for instance, $W_1=\{A, B \}$, $W_2=\{ A, B, C, D, E \}$, and $W_3=\{A, B, C, D, E, F, H, I \}$ are three ancestral sets. However, only $W_1$ and $W_3$ are maximal: $W_2$ is not maximal because $W_2 \subset W_3$ but $((G_{W_3})^m)^{W_2} = (G_{W_2})^m$, and $W_1$ is maximal because $W_1 \subset W_3$ and $(G_{W_1})^m$ is a proper subgraph of $((G_{W_3})^m)^{W_1}$. Note that $W_3$ specifies 
\begin{align*}
p(W_3) =& \varphi_3(A,B) \varphi_3(A,C) \varphi_3(B,E) \varphi_3(C,D) \varphi_3(D,E)\\
&\cdot \varphi_3(A,F) \varphi_3(B,I) \varphi_3(F,H) \varphi_3(H,I)
\end{align*}
and $W_2$ specifies
\[
p(W_2) = \varphi_2(A,B) \varphi_2(A,C) \varphi_2(B,E) \varphi_2(C,D) \varphi_2(D,E).
\]
Clearly, the former factorization implies the latter by taking
\begin{align*}
\varphi_2(A,B) =& \varphi_3(A,B) \sum_{F,H,I} \varphi_3(A,F) \varphi_3(B,I) \varphi_3(F,H) \varphi_3(H,I)\\
\varphi_2(A,C)=&\varphi_3(A,C)\\
\varphi_2(B,E)=&\varphi_3(B,E)\\
\varphi_2(C,D)=&\varphi_3(C,D)\\
\varphi_2(D,E)=&\varphi_3(D,E).
\end{align*}
Therefore, there is no need to specify both factorizations, as the factorization property does. It suffices to specify just the first.

A more convenient characterization of maximal ancestral sets is the following. An ancestral set $W'$ is maximal if and only if $pa(A \cup an(A) \setminus W') \cap W'$ is not a complete set in $(G_{W'})^m$ for any node $A \in V \setminus W'$.\footnote{In the terminology of \citet{Frydenberg1990}, $A \cup an(A) \setminus W'$ is a non-simplicial set in $(G_{W'})^m$.} To see it, note that any ancestral set $W''$ that contains $W' \cup A$ will also contain $an(A) \setminus W'$. Note also that no node $B \in A \cup an(A) \setminus W'$ has a neighbor or child in $W'$ because, otherwise, $B \in W'$ which is a contradiction. So, any such node $B$ can only have parents in $W'$. Moreover, since $pa(A \cup an(A) \setminus W') \cap W'$ is not a complete set in $(G_{W'})^m$, there must be two nodes in $pa(A \cup an(A) \setminus W') \cap W'$ that are not adjacent in $(G_{W'})^m$. However, there is a path between these two nodes in $(G_{W''})^m$ through $B$, which implies that $(G_{W'})^m$ is a proper subset of $((G_{W''})^m)^{W'}$.

\section{CAUSAL INTERPRETATION}\label{sec:causal}

In this section, we propose a causal interpretation of UDAGs. We start by introducing some notation. Given an UDAG $G$, let $W_1, \ldots, W_t$ denote all the minimal ancestral sets in $G$. Assume that the sets are sorted such that if $W_i \subset W_j$ then $i < j$. Moreover, let $C_i = W_i \setminus \cup_{j<i} W_j$. Note that all the edges between a node in $C_i$ and a node in $C_j$ with $i<j$ are directed edges from $C_i$ to $C_j$. Note also that every node in $C_i$ is an ancestor of the rest of the nodes in $C_i$. Let $bd(C_i)=pa(C_i) \setminus C_i$. Moreover, let $(G_{C_i \cup bd(C_i)})^*$ be the result of adding undirected edges to $(G_{C_i \cup bd(C_i)})^m$ until $bd(C_i)$ is a complete set. Note that for LWF CGs, the sets $C_i$ correspond to the chain components, $bd(C_i)=pa(C_i)$, and $(G_{C_i \cup bd(C_i)})^*=(G_{C_i \cup bd(C_i)})^m$. For instance, in the UDAG in Figure \ref{fig:example} we have that $W_1=\{A\}$, $W_2=\{B\}$, $W_3=\{B,D\}$ and $W_4=\{A,B,C,D,E,F,H\}$, and $C_1=\{A\}$, $C_2=\{B\}$, $C_3=\{D\}$ and $C_4=\{C,E,F,H\}$, and $bd(C_1)=\emptyset$, $bd(C_2)=\emptyset$, $bd(C_3)=\{B\}$ and $bd(C_4)=\{A,D\}$.

The following theorem presents a factorization property for UDAGs. Compared to that in Theorem \ref{the:factorization}, it is simpler and resembles the factorization property for LWF CGs. However, it is necessary but not sufficient. It will be instrumental to derive our causal interpretation of UDAGs.

\begin{theorem}\label{the:factorization2}
Let $p$ be a probability distribution satisfying the intersection property. If $p$ satisfies the pairwise Markov property with respect to an UDAG $G$, then
\[
p(V)= \prod_i p(C_i|bd(C_i))= \prod_i \prod_{K \in cl((G_{C_i \cup bd(C_i)})^*)} \varphi(K)
\]
where $\varphi(K)$ is a non-negative function.
\end{theorem}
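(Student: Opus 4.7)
I would prove the two equalities in sequence. Set $A_i := \bigcup_{j \leq i} W_j = \bigcup_{j \leq i} C_j$; as a union of ancestral sets this is itself ancestral, and $A_t = V$. The chain rule yields $p(V) = \prod_i p(C_i | A_{i-1})$, so for the first equality it suffices to establish $C_i \ci_p A_{i-1} \setminus bd(C_i) | bd(C_i)$ for every $i$.

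The key graph-theoretic observation is that, for every $A \in C_i$, the neighbors of $A$ in $(G_{A_i})^m$ that lie in $A_{i-1}$ are contained in $bd(C_i)$. A direct arrow from $A_{i-1}$ into $A$ must be a parent arrow, whose tail lies in $pa(C_i) \cap A_{i-1} = bd(C_i)$. A moralization edge $A - B$ with $B \in A_{i-1}$ arises from a collider section $A \ra V_2 - \cdots - V_{n-1} \la B$; since children of $C_i$-nodes cannot lie in the ancestral set $A_{i-1}$, each of $V_2, \ldots, V_{n-1}$ is in $C_i$, forcing $B \in pa(C_i) \cap A_{i-1} = bd(C_i)$. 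Hence $bd(C_i)$ separates $C_i$ from $A_{i-1} \setminus bd(C_i)$ in the UG $(G_{A_i})^m$. Since the pairwise Markov property of $p$ on the ancestral set $A_i$ coincides with the pairwise Markov property of $p(A_i)$ for the UG $(G_{A_i})^m$, the pairwise-global equivalence for UGs \citep[Theorem 3.7]{Lauritzen1996} (which uses intersection) delivers the desired block independence, and hence the first equality.

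For the second equality, fix $i$ and let $G^{(i)}$ be the UDAG obtained from $G$ by adding undirected edges within $bd(C_i)$ until $bd(C_i)$ becomes a complete set. No directed edges are added, so $G^{(i)}$ is a UDAG; every ancestral set of $G^{(i)}$ is ancestral in $G$, and on each such $W$ we have $(G^{(i)}_W)^m \supseteq (G_W)^m$. Consequently $p$ inherits the pairwise Markov property for $G^{(i)}$, and Theorem \ref{the:factorization} applied to $G^{(i)}$ on the ancestral set $A_i$ yields $p(A_i) = \prod_{K \in cl((G^{(i)}_{A_i})^m)} \varphi(K)$. I would split the cliques into those disjoint from $C_i$, whose product I call $g(A_{i-1})$, and those meeting $C_i$, whose product $f(C_i, bd(C_i))$ lives inside $C_i \cup bd(C_i)$ by the same orientation argument as above. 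Marginalizing over $C_i$ gives $p(A_{i-1}) = g(A_{i-1}) h(bd(C_i))$ with $h(bd(C_i)) = \sum_{C_i} f$, so $p(C_i | bd(C_i)) = f(C_i, bd(C_i)) / h(bd(C_i))$. Showing that every clique $K$ of $(G^{(i)}_{A_i})^m$ meeting $C_i$ is actually complete in $(G_{C_i \cup bd(C_i)})^*$ lets me rewrite $f$ as $\prod_{K' \in cl((G_{C_i \cup bd(C_i)})^*)} \varphi'(K')$; since $bd(C_i)$ is complete in $(G_{C_i \cup bd(C_i)})^*$ it sits inside some clique, and the factor $1/h(bd(C_i))$ can be absorbed into that clique's potential, producing the claimed factorization.

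The main technical obstacle is the completeness claim in $(G_{C_i \cup bd(C_i)})^*$. The case analysis splits on pairs of clique members: pairs in $bd(C_i)$ are immediate by the completion step; pairs touching $C_i$ reduce to moralization paths whose child-of-$C_i$ endpoints stay in $C_i$ and whose undirected sections remain in $C_i$ (the added edges of $G^{(i)}$ sit entirely in $bd(C_i)$, which is disjoint from $C_i$), so the corresponding moralization is already visible inside $(G_{C_i \cup bd(C_i)})^m \subseteq (G_{C_i \cup bd(C_i)})^*$. Once this is in hand, the standard Hammersley--Clifford-style bookkeeping finishes the proof.
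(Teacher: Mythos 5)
Your proposal is correct, and it reaches the result by a route that differs from the paper's in its middle machinery. For the first equality you both reduce to $C_i \ci \cup_{j<i} C_j \setminus bd(C_i) | bd(C_i)$; the paper simply asserts this separation, whereas you prove it via the observation that in $(G_{A_i})^m$ every neighbour of a $C_i$-node lying in $A_{i-1}$ is in $bd(C_i)$ -- a worthwhile addition. For the second equality the paper works at the level of undirected graphs and proceeds top-down: it completes $bd(C_t)$ inside the moral graph $G^m$, passes from the pairwise to the global Markov property (Theorem \ref{the:eqPG}), invokes \citet[Proposition 2.2]{Frydenberg1990b} to push the global Markov property to the marginal $p(C_t, bd(C_t))$ over $H_{C_t \cup bd(C_t)} = (G_{C_t \cup bd(C_t)})^*$, factorizes by \citet[Theorem 3.9]{Lauritzen1996}, absorbs $1/p(bd(C_t))$ into a clique containing the complete set $bd(C_t)$, and then recurses on the ancestral set $V \setminus C_t$. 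You instead treat every $i$ uniformly without recursion: you complete $bd(C_i)$ in the UDAG itself to get $G^{(i)}$, check that $p$ inherits the pairwise Markov property, apply the paper's own Theorem \ref{the:factorization} at the ancestral set $A_i$, and then carry out the marginalization and clique bookkeeping by hand, transferring the $C_i$-touching cliques into $(G_{C_i \cup bd(C_i)})^*$ via the same orientation argument and absorbing $1/h(bd(C_i))$ there. What your route buys is self-containedness (no appeal to Frydenberg's marginalization result and no induction over components); what the paper's route buys is brevity, since the marginalization step is outsourced. Two small points you should record explicitly: (i) besides "every ancestral set of $G^{(i)}$ is ancestral in $G$" you also need the converse-type fact that $A_i$ itself remains ancestral in $G^{(i)}$ in order to invoke Theorem \ref{the:factorization} there -- this is a one-line check, since the added edges lie inside $bd(C_i) \subseteq A_i$ and any putative new ancestor route reaches $bd(C_i)$ using only edges of $G$; and (ii) the division by $h(bd(C_i))$ needs the usual handling of zero-probability configurations, a caveat the paper's division by $p(bd(C_t))$ shares.
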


\begin{proof}

The first equality follows from the fact that $C_i \ci \cup_{j<i} C_j \setminus bd(C_i) | bd(C_i)$. To prove the second equality for $i=t$, note that $p$ satisfies the pairwise Markov property with respect to $G^m$, because $V$ is an ancestral set. Then, $p$ satisfies the global Markov property with respect to $G^m$ by Theorem \ref{the:eqPG}. Now, add undirected edges to $G^m$ until $bd(C_t)$ is a complete set, and call the resulting undirected graph $H$. Note that $p$ satisfies the global Markov property with respect to $H$. Note also that $(G_{C_t \cup bd(C_t)})^* = H_{C_t \cup bd(C_t)}$. Then, $p(C_t, bd(C_t))$ satisfies the global Markov property with respect to $H_{C_t \cup bd(C_t)}$ \citep[Proposition 2.2]{Frydenberg1990b}. This implies the second equality in the theorem because (i) $p(C_t, bd(C_t)) = \prod_{K \in cl((G_{C_i \cup bd(C_i)})^*)} \phi(K)$ \citep[Theorem 3.9]{Lauritzen1996}, (ii) $p(C_t | bd(C_t)) = p(C_t, bd(C_t)) / p(bd(C_t))$, and (iii) $bd(C_t)$ is a complete set in $(G_{C_t \cup bd(C_t)})^*$. Finally, note that $V \setminus C_t$ is an ancestral set and, thus, $p(V \setminus C_t)$ satisfies the pairwise Markov property with respect to $G_{V \setminus C_t}$. Then, repeating the reasoning above for $p(V \setminus C_t)$ and $G_{V \setminus C_t}$ proves the second equality in the theorem for all $i$.

\end{proof}

For instance, in the UDAG in Figure \ref{fig:example} we have that
\begin{align*}
p(V)=& p(A)p(B)p(D|B)p(CEFH|AD)\\
=&\varphi(A)\varphi(B)\varphi(DB)\varphi(CAD)\varphi(CFA)\varphi(CEF)\varphi(FH)\varphi(HD).
\end{align*}

Our causal interpretation of UDAGs is a generalization of the one proposed by \citet{LauritzenandRichardson2002} for LWF CGs. Specifically, they show that any probability distribution that satisfies the globally Markov property with respect to a LWF CG coincides with the equilibrium distribution of a dynamic system with feed-back loops. The proof consists in building a Gibbs sampler with the desired equilibrium distribution. The sampler samples the chain components in the order $C_1, \ldots, C_t$. Sampling a component $C_i$ consists in repeatedly updating the values of the variables $A \in C_i$ in random order according to the distribution $p(A | bd(C_i), C_i \setminus A)$ until equilibrium is reached. The interesting observation is that 
\[
p(A | bd(C_i), C_i \setminus A) = p(A | bd(C_i), ne(A))
\]
and thus the sampling process can be seen as a dynamic process with feed-back loops, since $A$ is dynamically affected by $ne(A)$ and vice versa. Thanks to the first equality in Theorem \ref{the:factorization2}, the causal interpretation just discussed can be generalized to UDAGs if
\[
p(A | bd(C_i), C_i \setminus A) = p(A | bd(C_i), pa(A), ne(A))
\]
that is, if $C_i \setminus (A \cup pa(A) \cup ne(A))$ carry no information about $A$ given $bd(C_i) \cup pa(A) \cup ne(A)$. One case where this may hold is when the causal relationships in the domain are constrained in their functional form, e.g. the effect is a function of the cause plus some noise, also known as additive noise model (ANM). If the function is non-linear, then it is unlikely that the cause can be expressed as an ANM of the effect \citep{Hoyeretal.2009,Petersetal.2014}. As a consequence, we expect any other variable to be hardly informative about the effect given the cause. Specifically, we expect $C_i \setminus (A \cup pa(A) \cup ne(A))$ to be hardly informative about $A$ given $bd(C_i) \cup pa(A) \cup ne(A)$, because we interpret the latter variables as the causes of $A$. So, our causal interpretation of UDAGs should approximately hold under the ANM assumption. We refer the reader to Appendix B for an algorithm for learning causal UDAGs and some preliminary experimental results.

\section{DISCUSSION}\label{sec:discussion}

We have introduced UDAGs, a new class of graphical models that unifies DAGs and UGs since it just forbids directed cycles and it allows up to two edges between any pair of nodes. We have presented local, pairwise and global Markov properties for UDAGs and proved their equivalence. We have also presented an equivalent factorization property. Finally, we have presented a causal interpretation of UDAGs. We refer the reader to the appendices for two learning algorithms for UDAGs and preliminary experimental results.

A natural question to tackle in the future is the characterization of Markov equivalent UDAGs. Although we have shown that UDAGs are a strict superclass of LWF CGs, it is unclear how much more expressive they are. Addressing this question is also of much interest. Finally, we are also interested in studying methods for parameterizing the factorization for UDAGs proposed here, as well as in improving the causal interpretation of UDAGs given here.

\bibliographystyle{plainnat}
\bibliography{unification8}

\newpage

\section*{APPENDIX A: ALGORITHM FOR LEARNING UDAGS}\label{sec:learning}

In this appendix, we describe an exact algorithm for learning UDAGs from data via answer set programming (ASP). The algorithm builds on the results in Section \ref{sec:algorithm} and it is essentially the same as the one developed by \cite{Sonntagetal.2015} for learning LWF CGs. ASP is a declarative constraint satisfaction paradigm that is well-suited for solving computationally hard combinatorial problems \citep{gelfond_1988,DBLP:journals/amai/Niemela99,DBLP:journals/ai/SimonsNS02}. ASP represents constraints in terms of first-order logical rules. Therefore, when using ASP, the first task is to model the problem at hand in terms of rules so that the set of solutions implicitly represented by the rules corresponds to the solutions of the original problem. One or multiple solutions to the original problem can then be obtained by invoking an off-the-shelf ASP solver on the constraint declaration. Each rule in the constraint declaration is of the form \verb|head :- body|. The head contains an atom, i.e. a fact. The body may contain several literals, i.e. negated and non-negated atoms. Intuitively, the rule is a justification to derive the head if the body is true. The body is true if its non-negated atoms can be derived, and its negated atoms cannot. A rule with only the head is an atom. A rule without the head is a hard-constraint, meaning that satisfying the body results in a contradiction. Soft-constraints are encoded as rules of the form \verb|:~ body. [W]|, meaning that satisfying the body results in a penalty of $W$ units. The ASP solver returns the solutions that meet the hard-constraints and minimize the total penalty due to the soft-constraints. A popular ASP solver is \verb|clingo| \citep{DBLP:journals/aicom/GebserKKOSS11}, whose underlying algorithms are based on state-of-the-art Boolean satisfiability solving techniques \citep{DBLP:series/faia/2009-185}.

Table \ref{tab:asp} shows the ASP encoding of our learning algorithm. The input to the algorithm is the set of independences in the probability distribution at hand, e.g. as determined from some available data. These are represented as a set of predicates \verb|ind(A,B,Z)| indicating that the nodes $A$ and $B$ are independent given the set of nodes $Z$. It is known that these pairwise independences (also called elementary triplets) uniquely identify the rest of independences in the distribution, or in a semi-graphoid for that matter \citep[Lemma 2.2]{Studeny2005}. The predicates \verb|node(A)| and \verb|set(Z)| represent that $A$ is the index of a node and $Z$ is the index of a set of nodes. The predicates \verb|line(A,B)| and \verb|arrow(A,B)| represent that there is an undirected and directed edge from the node $A$ to the node $B$. The rules 4 and 5 encode a non-deterministic guess of the edges, which means that the ASP solver will implicitly consider all possible UDAGs during the search, hence the exactness of the search. The rules 6 and 7 enforce the fact that undirected edges are symmetric and there is at most one directed edge between two nodes. The predicate \verb|ancestor(A,B)| represents that the node $A$ is an ancestor of the node $B$. The rules 8-10 enforce that there are no directed cycles. The predicates in the rules 11 and 12 represent whether a node $A$ is or is not in a set of nodes $Z$. The rules 13-23 encode the separation criterion for UDAGs as it was described in Section \ref{sec:algorithm}. Specifically, the predicate \verb|inU1(A,D,Z)| represents that there is a $Z$-active route from the node $A$ to the node $D$ that warrants the inclusion of $D$ in the set $U_1$. Similarly for the predicates \verb|inU2(A,D,Z)| and \verb|inU3(A,D,Z)|. The predicate \verb|act(A,B,Z)| in the rules 24 and 25 represents that there is a $Z$-active route between the node $A$ and the node $B$. The rule 26 enforces that each dependence in the input must correspond to an active route. The rules 27 and 28 represent a penalty of one unit per edge. Other penalty rules can be added similarly. By calling the ASP solver, the solver will essentially perform an exhaustive search over the space of UDAGs and return the sparsest minimal independence map.

It is worth noting that the algorithm in Table \ref{tab:asp} can easily be modified to learn DAGs and LWF CGs, which demonstrates the versatility of our approach. Specifically, learning DAGs can be performed by adding \verb|:- line(A,B).| Learning LWF CGs can be performed by adding \verb|:- line(A,B), arrow(A,B).|, \verb|:- line(A,B), arrow(B,A).|, and \verb|ancestor(A,B) :- line(A,B).|

Finally, preliminary experiments indicate that the algorithm in Table \ref{tab:asp} runs in acceptable time in a regular computer for up to seven nodes. To scale the learning process to larger domains, one may need to give up either the exactness or the assumption free nature of the algorithm here proposed.

\begin{table}
\caption{Algorithm for learning UDAGs.}\label{tab:asp}
\begin{center}
\tiny
\fbox{
\begin{minipage}{0.95\textwidth}
\begin{verbatim}
% input predicate: ind(A,B,Z), the nodes A and B are independent given 
%                  the set of nodes Z

#const n=7.
node(1..n).
set(0..(2**n)-1).

% edges
{ line(A,B) } :- node(A), node(B), A != B.                             % rule 4
{ arrow(A,B) } :- node(A), node(B), A != B.
line(A,B) :- line(B,A).                                                % rule 6
:- arrow(A,B), arrow(B,A).

% directed acyclity
ancestor(A,B) :- arrow(A,B).                                           % rule 8
ancestor(A,B) :- ancestor(A,C), ancestor(C,B).
:- ancestor(A,B), arrow(B,A).                                  	

% set membership
in(A,Z) :- node(A), set(Z), 2**(A-1) & Z != 0.                         % rule 11
out(A,Z) :- node(A), set(Z), 2**(A-1) & Z = 0.

% rules
inU2(A,A,Z) :- node(A), set(Z), out(A,Z).                              % rule 13
inU2(A,D,Z) :- inU2(A,C,Z), arrow(D,C), out(D,Z).
inU2(A,D,Z) :- inU2(A,C,Z), line(D,C), out(D,Z).
inU1(A,D,Z) :- inU1(A,C,Z), arrow(C,D), out(D,Z).
inU1(A,D,Z) :- inU2(A,C,Z), arrow(C,D), out(D,Z).
inU1(A,D,Z) :- inU1(A,C,Z), line(C,D), out(D,Z).
inU3(A,D,Z) :- inU1(A,C,Z), arrow(C,D), in(D,Z).
inU3(A,D,Z) :- inU2(A,C,Z), arrow(C,D), in(D,Z).
inU3(A,D,Z) :- inU1(A,C,Z), line(C,D), in(D,Z).
inU3(A,D,Z) :- inU3(A,C,Z), line(C,D).
inU2(A,D,Z) :- inU3(A,C,Z), arrow(D,C), out(D,Z).

% active routes
act(A,B,Z) :- inU1(A,B,Z), A != B.                                     % rule 24
act(A,B,Z) :- inU2(A,B,Z), A != B.

% satisfy all the dependences
:- not ind(A,B,Z), not act(A,B,Z), node(A), node(B), set(Z), A != B,
   out(A,Z), out(B,Z).	                                                % rule 26

% minimize the number of lines/arrows
:~ line(A,B), A < B. [1,A,B,1]                                         % rule 27
:~ arrow(A,B). [1,A,B,2]

% show results
#show.
#show arrow(A,B) : arrow(A,B).
#show line(A,B) : line(A,B), A < B.
\end{verbatim}
\end{minipage}
}
\end{center}
\end{table}

\newpage

\section*{APPENDIX B: ALGORITHM FOR LEARNING CAUSAL UDAGS}

In Section \ref{sec:causal}, we have proposed a causal interpretation of UDAGs under the assumption that it approximately holds that
\[
p(A | bd(C_i), C_i \setminus A) = p(A | bd(C_i), pa(A), ne(A))
\]
that is, $C_i \setminus (A \cup pa(A) \cup ne(A))$ are hardly informative about $A$ given $bd(C_i) \cup pa(A) \cup ne(A)$. Note that any $(pa(A) \cup ne(A))$-active route between a node in $bd(C_i) \setminus pa(A)$ and $A$ reaches $A$ through a node in $(ch(A) \cap C_i) \setminus ne(A)$. Since we have assumed that $C_i \setminus (A \cup pa(A) \cup ne(A))$ are hardly informative about $A$ given $bd(C_i) \cup pa(A) \cup ne(A)$, so are $(ch(A) \cap C_i) \setminus ne(A)$. Then, we expect $bd(C_i) \setminus pa(A)$ to be hardly informative about $A$ given $pa(A) \cup ne(A)$ and, thus, it should approximately hold that
\[
p(A | bd(C_i), C_i \setminus A) = p(A | pa(A), ne(A)).
\]
So, learning a causal UDAG boils down to learning the causes $pa(A)$ and $ne(A)$ of each variable $A$ under the ANM constraint. Additive noise is a rather common assumption in causal discovery \citep{Petersetal.2017}, mainly because it produces tractable models which are useful for gaining insight into the system under study. Note also that linear structural equation models, which have extensively been studied for causal effect identification \citep{Pearl2009}, are ANMs.

\begin{table}
\caption{Algorithm for learning causal UDAGs.}\label{tab:algorithm}
\begin{center}
\begin{tabular}{|rl|}
\hline
&\\
& {\bf Input}: A dataset $D$ over $V$ with $M$ observations, and an integer $L$.\\
& {\bf Output}: A causal UDAG over $V$.\\
&\\
1 & Let $\mathcal{G}$ be a random sample of $L$ UDAGs over $V$\\
2 & For each $G$ in $\mathcal{G}$\\
3 & \hspace{0.3cm} For each $A \in V$\\
4 & \hspace{0.7cm} $\hat{f}(A|pa(A),ne(A)) = GP(A,pa(A), ne(A),D)$\\
5 & \hspace{0.7cm} For $m=1, \ldots, M$\\
6 & \hspace{1.1cm} $\hat{U}_A^m = A^m - \hat{f}(A^m|pa^m(A),ne^m(A))$\\
7 & \hspace{0.3cm} Let $D_U$ denote the dataset over $U$ created in the previous line\\
8 & \hspace{0.3cm} $pvalue(G)=HSIC(D_U)$\\
9 & Return the simplest model in the set $arg \, max_{G \in \mathcal{G}} \,\, pvalue(G)$\\
&\\
\hline
\end{tabular}
\end{center}
\end{table}

In this appendix, we propose an algorithm for learning causal UDAGs under the assumptions discussed above. The algorithm builds on the ideas by \cite{Hoyeretal.2009} and \cite{Petersetal.2011,Petersetal.2014}, who exploit the non-linearities in the data to identify the directions of the causal relationships. Specifically, consider two variables $A$ and $B$ that are causally related as $B = f(A) + U_B$. Assume that there is no confounding, selection bias or feed-back loop between $A$ and $B$, which implies that $A$ and $U_B$ are independent. \cite{Hoyeretal.2009} prove that if the function $f$ is non-linear, then the correct direction of the causal relationship between $A$ and $B$ is generally identifiable from observational data: $A$ and $U_B$ are independent for the correct direction, whereas $B$ and $U_A$ are dependent for the incorrect direction. This leads to the following causal discovery algorithm. If $A$ and $B$ are independent then they are not causally related because we have assumed no confounding, selection bias or feed-back loop. If they are dependent then first construct a non-linear regression of $B$ on $A$ to get an estimate $\hat{f}$ of $f$, then compute the error $\hat{U}_B=B-\hat{f}(A)$, and finally test whether $A$ and $\hat{U}_B$ are independent. If they are so then accept the model $A \ra B$. Repeat the procedure for the model $B \ra A$. When both models or no model is accepted, it may be indicative that the assumptions do not hold. \cite{Petersetal.2011,Petersetal.2014} generalize this idea to more than two variables: Given a DAG over some variables, first construct a non-linear regression of each node on its parents, then compute each node's error, and finally test whether these errors are mutually independent. If they are so then accept the DAG as the true causal model. We propose to generalize this idea even further: Given an UDAG over some variables, first construct a non-linear regression of each node on its parents and neighbors, then compute each node's error, and finally test whether these errors are mutually independent. If they do so then accept the UDAG as the true causal model. This leads to the learning algorithm in Table \ref{tab:algorithm}. It receives as input a dataset $D$ with $M$ observations over the random variables $V$, and an integer $L$. The algorithm consists in sampling $L$ random UDAGs over $V$ (line 1), scoring each of them with respect to $D$ (lines 2-8), and returning the best one (line 9). Scoring an UDAG $G$ starts in the lines 3-4 pretty much like the algorithms by \cite{Hoyeretal.2009} and \cite{Petersetal.2011,Petersetal.2014}, i.e. obtaining an estimate $\hat{f}$ of $f$ by constructing a non-linear regression of each node $A$ on $pa(A) \cup ne(A)$ using Gaussian processes (GPs) \citep{RasmussenandWilliams2005}. This estimate is used in the lines 5-7 to compute the errors. We use a superscript to indicate the value of a set of variables in a particular instance of $D$, i.e. $A^m$ and $pa^m(A)$ and $ne^m(A)$ represent the value of $A$ and its parents and neighbors in the $m$-th instance of $D$. Finally, the line 8 scores the whole model by testing the independence of the errors. The null hypothesis is joint independence. Specifically, the function $HSIC$ returns the $p$-value of the Hilbert Schmidt independence criterion, which is a kernel statistical test of independence \citep{Grettonetal.2007}. Note that several UDAGs may score the highest $p$-value, e.g. every supergraph of an UDAG with the highest $p$-value may also receive the highest $p$-value. Therefore, the line 9 applies the Occam's Razor principle and returns the simplest best UDAG. Using GPs and the HSIC test are choices shared with \cite{Hoyeretal.2009} and \cite{Petersetal.2011,Petersetal.2014}. Other choices are also possible. We have implemented our learning algorithm in \texttt{R}. We use the packages \texttt{kernlab} and \texttt{dHSIC} for the GPs and the HSIC test. For the GPs, we use the Gaussian kernel with automatic width estimation. For the HSIC test, we use the gamma distribution approximation to the null distribution of the test statistic.\footnote{Code available at \texttt{https://www.dropbox.com/s/fuz4eow8f66omda/UDAGs.R?dl=0}.}

The rest of this appendix reports on preliminary results obtained by running the learning algorithm above on the DWD dataset, which contains climate data from the German Weather Service and has been used before for benchmarking causal discovery algorithms \citep{Mooijetal.2016,Petersetal.2014}. We use the data provided by the first reference.\footnote{Data available at \texttt{https://www.dropbox.com/s/rs4q8oeutgqfcdn/D1.csv?dl=0}.} The data consists of 349 instances, each corresponding to a weather station in Germany. Each instance consists of measurements for six random variables. We use only four of them so that the number of UDAGs is manageable and, thus, our learning algorithm has a chance to test most if not all of them. Specifically, there are 34752 UDAGs over four nodes (543 DAGs times 64 UGs), and we let the algorithm sample 50000 in line 1. The four random variables that we consider are altitude ($A$), temperature ($T$), precipitation ($P$), and sunshine duration ($S$). The last three variables represent annual mean values over the years 1961-1990. \citeauthor{Mooijetal.2016} argue that the causal relationships $A \ra T$, $A \ra P$ and $A \ra S$ are true. Their arguments are meteorological, i.e. not based on the data. We confirmed that these decisions make sense according to Wikipedia (entries for the terms "rain" and "precipitation").

Figure \ref{fig:dwd} shows the causal UDAG learned, which is in fact a LWF CG. This UDAG is clearly preferred ($p$-value = 0.0007) over an UDAG with only the three ground truth relationships ($p$-value = 5.4e-40). The $p$-values should be interpreted with caution. Their relative values are informative. However, their absolute values may not, because low $p$-values may be the result of the GPs underfitting the data. Of course, low $p$-values may also be indicative of the inadequacy of the ANM assumption. It is also worth mentioning that the best and second best UDAGs found by our algorithm are Markov equivalent but they receive different $p$-values (0.0007 versus 0.0004) since they represent different causal models.

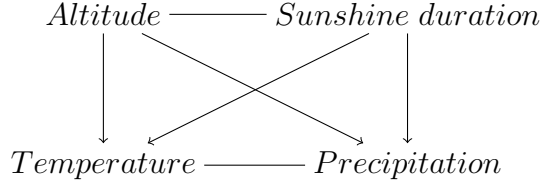
\begin{figure}
\centering
\begin{tabular}{c}
\begin{tikzpicture}[inner sep=1mm]
\node at (0,0) (A) {$Altitude$};
\node at (0,-2) (T) {$Temperature$};
\node at (4,-2) (P) {$Precipitation$};
\node at (4,0) (S) {$Sunshine \,\, duration$};
\path[->] (A) edge (T);
\path[->] (A) edge (P);
\path[->] (S) edge (T);
\path[->] (S) edge (P);
\path[-] (T) edge (P);
\path[-] (A) edge (S);
\end{tikzpicture}
\end{tabular}
\caption{Causal UDAG learned from the DWD dataset.}\label{fig:dwd}
\end{figure}

Now, we argue that the UDAG learned is plausible. The relationships $A \ra T$ and $A \ra P$ are confirmed by both \citeauthor{Mooijetal.2016} and Wikipedia. The relationship $S \ra T$ seems natural. We can also think of an intervention where we install new suns. We expect that the more suns the warmer. The relationship $A - S$ is confirmed by \citeauthor{Mooijetal.2016} due to selection bias: All the mountains in Germany are in the south and the south is typically sunnier. Note that this selection bias is at odds with the ANM assumption and, thus, one should not interpret the relationship $A - S$ as a feed-back loop. The relationship $A \ra S$ is also confirmed by \citeauthor{Mooijetal.2016} However, our algorithm is unable to learn an UDAG with a subgraph $A \ra S - A$, because removing the edge $A \ra S$ results in an UDAG with the same score, which is preferred by our algorithm because it is simpler. The relationship $T - P$ confirms the complex (including possibly feed-back) interplay between temperature and precipitation. According to Wikipedia, rain is produced by the condensation of atmospheric water vapor. Therefore, increasing water vapor in the air and/or decreasing the temperature are the main causes of precipitation. One way water vapor gets added to the air is due to increased temperature, causing evaporation from the surface of oceans and other water bodies. Moreover, precipitation typically causes a decrease in temperature, as rain drops form at high altitude where it is colder. The relationship $S \ra P$ is unconfirmed.

Finally, we mention some additional experiments that we plan to carry out. As discussed above, the best UDAG found is plausible. However, the second best UDAG receives a relatively close score (0.0007 versus 0.0004) but it is less plausible, as it includes the edge $P \ra A$. That the learning algorithm does not discriminate better these two models may be due to the inadequacy of the ANM assumption, but not necessarily. It may be that we have to choose carefully the width of the Gaussian kernel in the GPs, or consider other kernels, or consider other non-linear regressors, or consider the exact permutation-based null distribution of the HSIC test, or consider alternative hypothesis tests. We plan to study all these possibilities. In our experiments, the UDAG returned is actually a LWF CG. We expect UDAGs to reveal all their potential in larger and more complex domains. However, our brute-force learning algorithm does not scale well. That is why we plan to develop a greedy hill-climbing version of the algorithm that evaluates all the models that differ from the current one by one edge and then moves to the best of them. Note that we do not have to compute the score from scratch for each candidate model to evaluate, as at most two nodes are affected by a single edge modification.

\end{document}